\title{Implicit bias of any algorithm: bounding bias via margin}
\author{Elvis Dohmatob \email{e.dohmatob@criteo.com}\\\addr{Criteo AI Lab}
 }
\begin{document}






\maketitle

\begin{abstract}
Consider $n$ points $\x_1$,\ldots,$\x_n$ in finite-dimensional euclidean space, each having one of two colors. Suppose there exists a separating hyperplane (identified with its unit normal vector $\w)$ for the points, i.e a hyperplane such that points of same color lie on the same side of the hyperplane. 
We measure the quality of such a hyperplane by its margin $\gamma(\w)$, defined as minimum distance between any of the points $\x_i$ and the hyperplane. In this paper, we prove that the margin function $\gamma$ satisfies a nonsmooth Kurdyka-\L{}ojasiewicz inequality with exponent $1/2$. This result has far-reaching consequences. For example, let $\gamma^{opt}$ be the maximum possible margin for the problem and let $\w^{opt}$ be the parameter for the hyperplane which attains this value. Given any other separating hyperplane with parameter $\w$, let $d(\w):=\|\w-\w^{opt}\|$ be the euclidean distance between $\w$ and $\w^{opt}$, also called the bias of $\w$. From the previous KL-inequality, we deduce that $(\gamma^{opt}-\gamma(\w)) / R \le d(\w) \le 2\sqrt{(\gamma^{opt}-\gamma(\w))/\gamma^{opt}}$, where $R:=\max_i \|\x_i\|$ is the maximum distance of the points $\x_i$ from the origin. Consequently, for any optimization algorithm (gradient-descent or not), the bias of the iterates converges at least as fast as the square-root of the rate of their convergence of the margin. Thus, our work provides a generic tool for analyzing the implicit bias of any algorithm in terms of its margin, in situations where a specialized analysis might not be available: it is sufficient to establish a good rate for converge of the margin, a task which is usually much easier.
\end{abstract}

\section{Introduction}

All through this manuscript, $\mathbb R^m$ will be equipped with the euclidean / $\ell_2$-norm, which we will simply write, $\|\cdot\|$ (without the subscript $2$). We consider binary classification problems with data
$(\x_1,y_1),\ldots,(\x_n,y_n)$ drawn from an unknown distribution on $\mathbb R^m \times \{\pm 1\}$. For each $i \in [n]$, $y_i \in \{\pm 1\}$ is the label and $\x_i \in \mathbb R^m$ are the features of the $i$th example. For simplicity, we will assume $\|\x_i\| \le 1$ for all $i \in [i]$. The integer $n\ge 1$ is the sample size, while $m$ is the dimensionality of the problem. Let $\mathbb S_{m-1} := \{\w \in \mathbb R^m \mid \|\w\|_2 = 1\}$ is the $(m-1)$-dimensional unit-sphere.

We are interested
in "large margin" linear classifiers. Any such model is indexed by a unit-vector $\w \in \sphere$. The prediction on an input example $\x \in \mathbb R^m$ is $\mbox{sign}(\x^\top\w) \in \{\pm 1\}$, where $\x^\top\w$ is the inner product of $\x$ and $\w$ which we will also interchangeably denote by $\langle \x,\w\rangle$. 
The margin of any $\w \in \sphere$, denoted
$\gamma(\w)$, defined by
\begin{eqnarray}
\gamma(\w) := \min_{i \in [n]} y_i\x_i^\top\w.
\label{eq:gamma}
\end{eqnarray}
This measures the minimum (signed) distance of the samples $\x_1,\ldots,\x_n$ to the induced hyperplane $\w^\perp := \{\x \in \mathbb R^m \mid \x^\top\w = 0\}$. Consider the optimal / maximum margin $\gammaopt \in [0, 1]$ for the problem, defined by
\begin{eqnarray}
\gammaopt := \max_{\w \in \mathbb S_{m-1}} \gamma(\w) = \max_{\w \in \mathbb S_{m-1}}\min_{i
  \in [n]} y_i\x_i^\top\w.
  \label{eq:gammaopt}
\end{eqnarray}
This is the maximum possible margin attainable by a linear classifier on the problem.
We will assume that the problem is (linearly) separable, meaning that $\gammaopt > 0$.
Finally, let $\wopt := \arg\max_{\w \in \mathbb S_{m-1}} \gamma(\w) := \{\w \in \mathbb S_{m-1} \mid \gamma(\w) = \gammaopt\}$ be the max-margin model (unique).



\subsection{Summary of main contributions}
Our main contributions can be summarized as follows
\begin{itemize}
\item With a certain nonsmooth replacement of the notion of gradient norm (namely the so-called \emph{strong slope} \cite{degiorgi80}), we prove in Theorem \ref{thm:gold} that the function $f:\mathbb R^m \to \mathbb R\cup \{+\infty\}$ defined by
$$
f(\w) := \begin{cases}-\gamma(\w),&\mbox{ if }\w \in \sphere,\\+\infty,&\mbox{ else.}
\end{cases}
$$
satisfies a \emph{Kurdyka-\L{}ojasiewicz inequality} with exponent 1/2 around the max-margin model $\wopt$, on the unit-sphere $\sphere$. A highlight of this result is that it hints on the possibility of the existence of very fast (perhaps quasi-linear time) algorithms for finding the max-margin model on separable data. These algorithms need not necessarily be gradient-descent in the usual sense. Indeed "sufficient descent" conditions together with KL-inequalities of exponent 1/2 around critical points, are known to lead to linear-time algorithms (even for nonsmooth objectives) ~\cite{Attouch2009}.
\item For our second contribution, we prove in Theorem \ref{thm:main} that
\begin{eqnarray}
\frac{\gammaopt - \gamma(\w)}{R} \le \|\w-\wopt\| \le 2\sqrt{\frac{\gammaopt - \gamma(\w)}{\gammaopt}}.
\label{eq:cinema}
\end{eqnarray}
where $R  = \max_i \|\x_i\|$. These inequalities are graphically illustrated in Figure \ref{fig:figure}. Of course, the LHS is trivial since gamma is Lipschitz w.r.t $\w$. Consequently, for any optimization algorithm (gradient-descent or not), the \emph{bias} $\|\w(t)-\wopt\|$ of the iterates $\w(t)$ converges at least as fast as the square-root of the rate of their convergence of the margin (deficit of) $\gammaopt - \gamma(\w(t))$. Thus, our work provides a generic tool for analyzing the implicit bias of any algorithm in terms of its margin. This can be especially useful in situations where a specialized analysis might not be available; it is then good enough to establish a rate for convergence of the margin, a task which is usually much easier, and then convert it via \eqref{eq:cinema} to a rate of convergence for the bias.
\end{itemize}
\begin{figure}[!htb]
    \centering
    \includegraphics[width=0.7\linewidth]{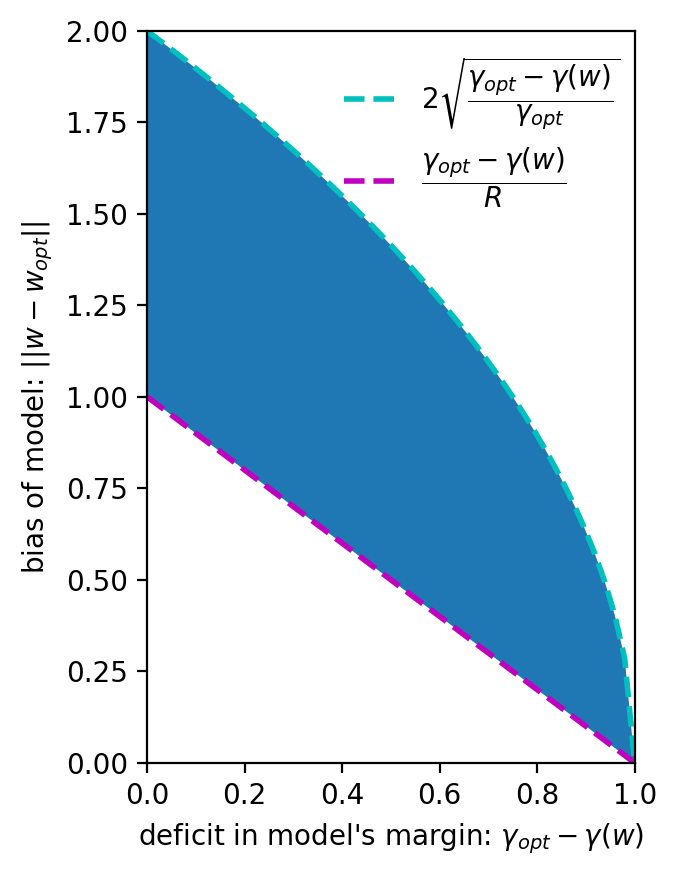}
    \caption{Graphical illustration of \eqref{eq:cinema}. For simplicity, the values in on the figure axes assume $\gammaopt = R = 1$. The result, developed in Theorem \ref{thm:gold}, states that the plots of the margin deficit versus the bias of the iterates generated by any algorithm must lie within the shaded area. This allows one to transfer convergence rates for the margin to of iterates generated by any optimization algorithm, to convergence rates for the bias and vise versa. 
    }
    \label{fig:figure}
\end{figure}
\subsection{Related works}
There is a rich body of research on understand the limiting dynamics of the iterates generated by gradient-descent, i.e the so-called so-called \emph{implicit bias} of the latter.
In the case of linear models with exponential-tailed losses, ~\cite{soudry2017},~\cite{soudry2018}, ~\cite{Gunasekar2018}, ~\cite{telgarsky2019,telgarsky2020} make up the standard literature. These papers all prove that the iterates of gradient-descent on linearly separable binary classification problems converge to the max-margin linear classifier $\wopt$ with margin 
$\gammaopt$. They also contain explicit rates of convergence.
The very recent work ~\cite{telgarsky2020}
establishes a convergence rate of $\mathcal O(1/t)$ for both the margin the bias of gradient-descent. More precisely, the authors show that gradient-descent on exponentially-tailed loss functions and aggressive stepsizes converges with rate $\mathcal O(1/t)$ in both the margin and the bias. These convergence rates are the best known currently in the literature.

Finally, in the case of neural network classifiers, let us mention
~\cite{chizat2020} which analyzes gradient-descent on neural networks with one hidden-layer with logistic loss function, and ~\cite{lyu2020} which studies deep neural networks with positive-homogeneous activation functions (e.g RELU) and exponential-tail loss functions.

\section{Main results}

\label{sec:main}
\subsection{Preliminaries on nonsmooth error bounds}
Central to our paper will be the notions of \emph{strong slope} ~\cite{degiorgi80} and generalized nonsmooth \emph{Kurdyka-\L{}ojasiewicz inequalities} ~\cite{Attouch2009,hoffmanlsc,nonlinearhoffman,bolteOT}. These concepts are now standard in optimization.
\begin{restatable}[Nonsmooth Kurdyka-\L{}ojasiewicz inequalites via strong slope]{df}{}
\label{df:KL}
Let $M=(M,d)$ be a complete metric space. An extended-value function $f:M \to \mathbb R \cup \{\pm \infty\}$ is said to satisfy a generalized Kurdyka-\L{}ojasiewicz inequality with exponent $\theta > 0$ and modulus $\alpha>0$, around the point $\w_0 \in  M$ if there exists $\nu > 0$ and such that
\begin{eqnarray}
  \sslope f(\w) \ge \frac{\alpha}{\theta}(f(\w)-f(\w_0))^{1-\theta}\;\forall \w \in M\text{ with }f(\w_0) < f(\w) < f(\w_0) + \nu.
  \label{eq:KL}
\end{eqnarray}
\end{restatable}
Here, $\sslope f(\w)  \in [0,+\infty]$ is the strong slope \cite{degiorgi80,hoffmanlsc,nonlinearhoffman} of $f$ at the point $\w$, a "synthetic" lower-bound of the the rate of change of $f$ at $\w$ in any direction, defined by
\begin{eqnarray}
\sslope f(\w) := \limsup_{\w' \to \w}\frac{(f(\w)-f(\w'))_+}{d(\w,\w')}.
\end{eqnarray}
Our interest in strong slopes and KL-inequalities is motivated by the following result from ~\cite[Corollary 5.1]{nonlinearhoffman}, which will be the main workhorse for proving our theorems.
\begin{restatable}[Nonlinear error-bound via Kurdyka-\L{}ojasiewicz]{prop}{}
Let $M$ be a complete metric space and $f:M \to \mathbb R\cup\{+\infty\}$ be a proper l.s.c function which satisfies a KL-inequality around $\w_0$ with exponent $\theta > 0$ and other parameters as in \eqref{eq:KL}. Then we have the error bound
\begin{eqnarray}
\dist(\w,\{f \le f(\w_0)\}) \ge \frac{(f(\w)-f(\w_0))^\theta}{\alpha},\;\forall \w \in M \text{ with } f(\w_0) < f(\w) < f(\w_0) + \nu.
\label{eq:nonlinearhoffman}
\end{eqnarray}
\label{prop:nonlinearhoffman}
\end{restatable}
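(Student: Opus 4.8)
\emph{Proof strategy.} This is essentially \cite[Corollary~5.1]{nonlinearhoffman}, and the plan is to recall its two-step proof: first a \emph{desingularization} that renormalizes the Kurdyka-\L{}ojasiewicz inequality \eqref{eq:KL} into a unit strong-slope bound, and then a one-shot application of Ekeland's variational principle to descend onto the sublevel set $\{f\le f(\w_0)\}$ while controlling the distance travelled.

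For the first step I would introduce the desingularizing function $\varphi(t):=t^{\theta}/\alpha$ on $[0,\infty)$, which is continuous and increasing, satisfies $\varphi(0)=0$, and is $C^{1}$ on $(0,\infty)$ with $\varphi'(t)=(\theta/\alpha)t^{\theta-1}>0$. Writing $\mathcal R:=\{\w\in M:f(\w_0)<f(\w)<f(\w_0)+\nu\}$, a one-line rearrangement shows that \eqref{eq:KL} is exactly the statement $\varphi'(f(\w)-f(\w_0))\,\sslope f(\w)\ge 1$ for all $\w\in\mathcal R$. Then I would set $g:M\to[0,+\infty]$, $g(\w):=\varphi((f(\w)-f(\w_0))_+)$; this $g$ is proper and lower semicontinuous (as $f$ is l.s.c.\ and $\varphi$ is continuous and nondecreasing), one has $\{g\le 0\}=\{f\le f(\w_0)\}$, and $g(\w)=(f(\w)-f(\w_0))^{\theta}/\alpha$ whenever $\w\in\mathcal R$. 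Invoking the chain rule $\sslope(\varphi\circ h)(\w)=\varphi'(h(\w))\,\sslope h(\w)$ for post-composition with a nondecreasing $C^{1}$ function --- legitimate on $\mathcal R$, where $f(\w)-f(\w_0)$ ranges over $(0,\nu)$ and $\varphi'$ is finite, positive and continuous --- turns the renormalized inequality into $\sslope g(\w)\ge 1$ for every $\w\in\mathcal R$. It thus suffices to show that for each $\w\in\mathcal R$ there is $\bar\w\in\{g\le 0\}$ with $d(\w,\bar\w)\le g(\w)$.

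For the second step, fix $\w\in\mathcal R$; if $\{f\le f(\w_0)\}=\varnothing$ the claim is vacuous, so assume it is nonempty and hence $\inf_{M}g=0$. Given any $\sigma\in(0,1)$, the plan is to apply Ekeland's variational principle to the l.s.c.\ function $g\ge 0$ at the base point $\w$ with $\varepsilon:=g(\w)$ and $\lambda:=g(\w)/\sigma$ (admissible since $g(\w)\le\inf_{M}g+\varepsilon$): this yields $\bar\w$ with $g(\bar\w)\le g(\w)$, $d(\w,\bar\w)\le\lambda=g(\w)/\sigma$, and $g(\bar\w)<g(y)+\sigma\,d(y,\bar\w)$ for all $y\ne\bar\w$. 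The last property means $\bar\w$ strictly minimizes $g+\sigma\,d(\cdot,\bar\w)$, so $\sslope g(\bar\w)\le\sigma<1$. Since $0\le g(\bar\w)\le g(\w)<\varphi(\nu)$, if $g(\bar\w)>0$ then $\bar\w\in\mathcal R$, and the first step would force $\sslope g(\bar\w)\ge 1$, a contradiction; hence $g(\bar\w)=0$, i.e.\ $\bar\w\in\{f\le f(\w_0)\}$, whence $\dist(\w,\{f\le f(\w_0)\})\le d(\w,\bar\w)\le g(\w)/\sigma$. Letting $\sigma\uparrow 1$ and substituting $g(\w)=(f(\w)-f(\w_0))^{\theta}/\alpha$ yields the error bound \eqref{eq:nonlinearhoffman}.

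I expect the only delicate points to be bookkeeping rather than ideas: (i) the strong-slope chain rule used above, which I would isolate as a short lemma and which is precisely the reason for restricting to $\mathcal R$ (away from $t=0$, where $\varphi'$ blows up when $\theta<1$); and (ii) the strict-versus-nonstrict inequality in Ekeland's principle, which is why the argument is run with $\sigma<1$ and then passed to the limit. An Ekeland-free route is also available: one can greedily build a curve of near-maximal slope of $g$ issuing from $\w$, and since its slope stays at least $1$ it must reach $\{g\le 0\}$ after arc length at most $g(\w)$, giving the same estimate $\dist(\w,\{f\le f(\w_0)\})\le g(\w)$.
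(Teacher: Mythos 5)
The paper does not prove this proposition at all --- it is imported verbatim from \cite[Corollary 5.1]{nonlinearhoffman} --- so there is no internal proof to compare against; what you have done is reconstruct the standard proof of that cited result, and your reconstruction is sound. The desingularization $\varphi(t)=t^{\theta}/\alpha$ does turn \eqref{eq:KL} into $\varphi'(f(\w)-f(\w_0))\,\sslope f(\w)\ge 1$, and the Ekeland step (with $\varepsilon=g(\w)$, $\lambda=g(\w)/\sigma$, $\sigma\uparrow 1$) is exactly the argument of A\'ez--Corvellec type error bounds. Two small remarks on rigor: for the chain rule you only need the inequality $\sslope(\varphi\circ h)(\w)\ge\varphi'(h(\w))\,\sslope h(\w)$, not the equality you state; this one-sided version is immediate when $\theta\le 1$ (concavity of $\varphi$ gives $\varphi(a)-\varphi(b)\ge\varphi'(a)(a-b)$ for $a\ge b$, with no continuity assumption on $f$), and it is all your argument uses, so nothing breaks --- but it is worth stating it that way, since $f$ is only l.s.c. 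Second, and more importantly: your argument proves
$\dist(\w,\{f\le f(\w_0)\})\le (f(\w)-f(\w_0))^{\theta}/\alpha$,
i.e.\ the \emph{reverse} of the inequality sign printed in \eqref{eq:nonlinearhoffman}. This is not a gap on your side: the ``$\ge$'' in the paper's statement is evidently a typo, since the cited Corollary 5.1 is an upper bound on the distance, and the proof of Theorem \ref{thm:main} uses Proposition \ref{prop:nonlinearhoffman} precisely to bound $\|\w-\wopt\|$ from \emph{above} by $2\sqrt{(\gammaopt-\gamma(\w))/\gammaopt}$. So your proof establishes the statement that is actually intended and used; just flag the sign explicitly rather than silently proving the corrected version.
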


Strong slopes are difficult to compute in general. Fortunately, they can be bounded in terms of more familiar quantities. For example, if $M=(M,\|\cdot\|)$ is a Banach space with topological dual $M^\star=(M^\star,\|\cdot\|_\star)$ and $\partial f(\w) \subseteq M^\star$ is the is the \emph{Fr\'echet subdifferential} of $f$ at $\w$, defined by
\begin{eqnarray}
\partial f(\w) := \left\{\w^\star \in M^\star \;\bigg|\; \liminf_{\w' \to \w}\frac{f(\w')-f(\w) - \langle \w^\star,\w'-\w\rangle}{\|\w'-\w\|} \ge 0\right\},
\label{eq:frechetsubdiff}
\end{eqnarray}
then the following bounds hold
\begin{eqnarray}
\liminf_{(\w',f(\w')) \to (\w,f(\w))}\|\partial f(\w')\| \le \sslope f(\w) \le \|\partial f(\w)\|_\star,
\label{eq:sslopebounds}
\end{eqnarray}
where $\|\partial f(\w)\|_\star$ is the minimum norm of subgradients of $F$ at $\w$, i.e $\|\partial f(\w)\|_\star := \inf \{\|\w^\star\|_\star \mid \w^\star \in \partial f\}$. See \cite{nonlinearhoffman}, for example. In particular,
\begin{itemize}
\item If $f$ is convex, then the second inequality in \eqref{eq:sslopebounds} is an equality. In this case, $\partial f$ is given by the familiar formula
$\partial f(\w) := \{\w^\star \in M^\star \mid f(\w') \ge f(\w) + \langle \w^\star,\w'-\w\rangle\;\forall \w' \in M\}$.
Furthermore, if $\w$ is not not a local minimum point of $f$, then both inequalities in \eqref{eq:sslopebounds} are equalities.
\item If $i_S$ is the indicator function of a nonempty subset $S \subseteq M$, then a simple calculation using the definition \eqref{eq:frechetsubdiff} shows that for every $\w \in S$ we have $\partial i_S(\w) := N_S^{\text{Fr\'echet}}(\w)$, the \emph{Fr\'echet normal cone} of $S$ as $\w$, i.e
\begin{eqnarray}
\label{eq:fncone}
 \partial i_S(\w) = N_S^{\text{Fr\'echet}}(\w) := \left\{\w^\star \in M^\star \;\bigg|\; \limsup_{\w' \overset{S}{\to} \w}\frac{\langle \w^\star,\w'-\w)}{\|\w'-\w\|} \le 0\right\},
\end{eqnarray}
where $\w' \overset{S}{\to} \w$ is means that the limit is taken ax $\w'$ tends to $\w$ while staying within $S$.
\end{itemize}


\subsection{Statement of main results}
The following is the first of our main results. All proofs will be provided in section \ref{sec:proofs}.
\begin{mdframed}[backgroundcolor=cyan!10,rightline=false,leftline=false]
\begin{restatable}[Kurdyka-\L{}ojasiewicz inequality for the margin]{thm}{gold}
The extended-value function $f:\mathbb R^m \to \mathbb R \cup \{+\infty\}$ defined by
$$
f(\w) = \begin{cases}-\gamma(\w),&\mbox{ if }\w \in \mathbb S_{m-1},\\+\infty,&\mbox{ else.}\end{cases}
$$
satisfies a KL-inequality \eqref{eq:KL} around the max-margin model $\wopt$, with exponent $\theta=1/2$.
\label{thm:gold}
\end{restatable}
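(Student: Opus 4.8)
The plan is to work on the unit sphere $\sphere$ with the ambient metric $d(\w,\w') = \|\w-\w'\|$ and to compute (or lower-bound) the strong slope of $f$ directly at points $\w$ near $\wopt$ with $\gamma(\w) < \gammaopt$. Since $f = -\gamma + i_{\sphere}$, and $-\gamma$ is concave (a min of linear functions, negated) but we only need a one-sided estimate, I would first reduce the strong-slope computation to a directional statement: $\sslope f(\w) \ge \sup_{\v} \liminf_{t \downarrow 0} \frac{\gamma(\w(t)) - \gamma(\w)}{t \cdot \|\w(t)-\w\|/t}$ where $\w(t)$ is a curve on $\sphere$ starting at $\w$ with velocity $\v$ in the tangent space $T_\w\sphere = \w^\perp$. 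Concretely, take $\w(t) = (\w + t\v)/\|\w+t\v\|$ for $\v \perp \w$; then $\|\w(t)-\w\| = t\|\v\| + o(t)$, so the strong slope at $\w$ is at least $\sup_{\v \in \w^\perp,\ \v \ne 0} \frac{1}{\|\v\|}\liminf_{t\downarrow 0}\frac{\gamma(\w(t))-\gamma(\w)}{t}$.

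The second step is to choose a good ascent direction $\v$. The natural choice is $\v = P_{\w^\perp}(\wopt) = \wopt - \langle \wopt,\w\rangle \w$, the projection of the optimal direction onto the tangent space. I would then lower-bound the one-sided directional derivative of $\gamma$ along this $\v$. Because $\gamma$ is a pointwise minimum of the linear functions $\w \mapsto y_i\x_i^\top\w$, its lower directional derivative at $\w$ in direction $\v$ equals $\min_{i \in A(\w)} y_i\x_i^\top\v$, where $A(\w)$ is the active set of (near-)minimizing indices; more robustly, one can avoid the active-set subtlety by using the plain estimate $\gamma(\w(t)) \ge \gamma(\w+t\v) / \|\w+t\v\| $ and noting $\gamma$ is positively homogeneous, so $\gamma(\w+t\v) = \min_i y_i\x_i^\top(\w+t\v) \ge \gamma(\w) + t\,\gamma'(\w;\v)$ with $\gamma'(\w;\v) \ge \min_{i \in A(\w)} y_i\x_i^\top \v$. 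The key geometric fact I want is: for any $\w$, $\min_{i\in A(\w)} y_i\x_i^\top \wopt \ge \gammaopt$ (since every $y_i\x_i^\top\wopt \ge \gammaopt$), hence $\min_{i\in A(\w)} y_i\x_i^\top\v = \min_{i\in A(\w)} y_i\x_i^\top\wopt - \langle\wopt,\w\rangle\gamma(\w) \ge \gammaopt - \langle \wopt,\w\rangle\gamma(\w)$. Combining, $\gamma'(\w;\v) \ge \gammaopt - \langle\wopt,\w\rangle\gamma(\w)$, and since $\langle\wopt,\w\rangle \le 1$ and $\gamma(\w) \le \gammaopt$ one gets $\gamma'(\w;\v) \ge \gammaopt - \gamma(\w) = f(\w) - f(\wopt)$ — actually one should be a bit more careful and also account for the $o(t)$ from the renormalization $1/\|\w+t\v\|$, whose first-order term vanishes because $\v\perp\w$.

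The third step assembles the KL inequality. With $\|\v\| = \|P_{\w^\perp}\wopt\| = \sqrt{1 - \langle\wopt,\w\rangle^2} \le 1$, the strong-slope bound gives
\begin{eqnarray*}
\sslope f(\w) \ge \frac{\gamma'(\w;\v)}{\|\v\|} \ge \frac{\gammaopt - \gamma(\w)}{\sqrt{1-\langle\wopt,\w\rangle^2}}.
\end{eqnarray*}
To get exponent $1/2$ I need $\sslope f(\w) \ge c\,(f(\w)-f(\wopt))^{1/2} = c\,(\gammaopt-\gamma(\w))^{1/2}$, so it suffices to show the denominator $\sqrt{1-\langle\wopt,\w\rangle^2} = O(\sqrt{\gammaopt - \gamma(\w)})$ for $\w$ near $\wopt$. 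This is exactly the content of the right-hand inequality in \eqref{eq:cinema} (the bias bound $\|\w-\wopt\| \le 2\sqrt{(\gammaopt-\gamma(\w))/\gammaopt}$), combined with $1 - \langle\wopt,\w\rangle^2 \le 2(1-\langle\wopt,\w\rangle) = \|\w-\wopt\|^2$. So the circular-looking dependence is resolved by proving the bias bound first by an elementary direct argument: write $\gamma(\w) \ge$ ... no — rather, use $\gammaopt = \gamma(\wopt) \ge $ the value of the linear functional achieving $\gamma(\w)$, i.e. pick $i^\star \in A(\w)$, then $\gammaopt \le y_{i^\star}\x_{i^\star}^\top\wopt$ while $\gamma(\w) = y_{i^\star}\x_{i^\star}^\top\w$, giving $\gammaopt - \gamma(\w) \le y_{i^\star}\x_{i^\star}^\top(\wopt - \w) \le \|\wopt-\w\|$ (that's the easy LHS with $R \le 1$); for the upper bound on $\|\w-\wopt\|$ one uses strong convexity of the squared distance / the fact that $\gamma$ restricted to the sphere has the max-margin point as a strict max with quadratic growth, which itself follows from $\gamma(\wopt) - \gamma(\w) \ge \gammaopt(1 - \langle\w,\wopt\rangle) = (\gammaopt/2)\|\w-\wopt\|^2$ — this last inequality being provable because $\gamma(\frac{\w+\wopt}{\|\w+\wopt\|}) \ge \frac12(\gamma(\w)+\gamma(\wopt))/\|\tfrac{\w+\wopt}{2}\|$ by concavity-plus-homogeneity, and $\gamma$ of a sphere point is $\le \gammaopt$.

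The main obstacle I anticipate is handling the active set $A(\w)$ carefully: the pointwise minimum $\gamma$ is only directionally differentiable, and the "active set at $\w$" must be replaced, in the $\limsup$ defining the strong slope, by indices that are active at nearby points $\w'$ as well. The clean way around this is to never differentiate: use only the global inequalities $y_i\x_i^\top\wopt \ge \gammaopt$ for all $i$ and the homogeneity/concavity of $\gamma$, so that for the specific competitor point $\w' = \w(t)$ we can lower-bound $\gamma(\w(t))$ for all small $t>0$ without knowing which constraint is active. Thus the real work is (i) establishing the quadratic growth inequality $\gammaopt - \gamma(\w) \ge (\gammaopt/2)\|\w - \wopt\|^2$ by the concavity-homogeneity trick, and (ii) verifying that the renormalization error terms are genuinely $o(t)$ uniformly enough to drop. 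Everything else is bookkeeping with \eqref{eq:sslopebounds} and \eqref{eq:nonlinearhoffman}.
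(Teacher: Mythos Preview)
Your plan is essentially correct and would yield the theorem, but it proceeds along a genuinely different route from the paper, and in fact inverts the paper's logical order.

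The paper's proof never touches a test curve. It writes $f = g + i_{\sphere}$ with $g(\w)=\max_i(-y_i\x_i^\top\w)$, computes the Fr\'echet subdifferential via the sum rule as $\partial f(\w)\subseteq\{\A^\top\q + b\w : \q\in\Delta_{n-1}(I(\w)),\,b\in\mathbb R\}$, minimizes over $b$ to project onto $\w^\perp$, and then uses the elementary dual estimate $\|\A^\top\q\|\ge\gammaopt$ for every $\q\in\Delta_{n-1}$ (Lemma~\ref{lm:smalllemma}) together with $\w^\top\A^\top\q=-\gamma(\w)$ for $\q$ supported on $I(\w)$. This gives $\|\partial f(\w)\|^2\ge\gammaopt^2-\gamma(\w)^2\ge\gammaopt\,(f(\w)-\min f)$, and the slope bound \eqref{eq:sslopebounds} converts this into the KL inequality with exponent $1/2$. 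The bias bound (Theorem~\ref{thm:main}) is then a \emph{consequence} of KL via Proposition~\ref{prop:nonlinearhoffman}.

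Your plan instead lower-bounds $\sslope f(\w)$ by moving along the tangent direction $\v=P_{\w^\perp}\wopt$, obtains $\sslope f(\w)\ge(\gammaopt-\gamma(\w))/\sqrt{1-\langle\w,\wopt\rangle^2}$, and then needs $1-\langle\w,\wopt\rangle^2\lesssim(\gammaopt-\gamma(\w))/\gammaopt$ to finish. That last step is precisely the quadratic-growth (bias) bound, which you propose to establish \emph{first} by the concavity-plus-homogeneity midpoint trick. So where the paper proves ``KL $\Rightarrow$ quadratic growth'', you prove ``quadratic growth $\Rightarrow$ KL''. Both directions are valid here; yours is more elementary (no subdifferential calculus, no normal-cone computation), while the paper's route isolates the dual inequality $\inf_{\q\in\Delta_{n-1}}\|\sum_i q_iy_i\x_i\|\ge\gammaopt$ as the single algebraic driver and yields a slightly sharper modulus.

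One small correction: your midpoint argument gives $\gammaopt-\gamma(\w)\ge 2\gammaopt\bigl(1-\sqrt{(1+c)/2}\bigr)=\gammaopt\,(1-c)/(1+\sqrt{(1+c)/2})\ge\tfrac{\gammaopt}{2}(1-c)=\tfrac{\gammaopt}{4}\|\w-\wopt\|^2$, not $\tfrac{\gammaopt}{2}\|\w-\wopt\|^2$ as you wrote; this only affects the constant, not the exponent.
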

\end{mdframed}
Note that the (negative) margin function $-\gamma$ which is the subject of the above theorem is neither smooth nor convex.
For our second main contribution, we have the following result.
\begin{mdframed}[backgroundcolor=cyan!10,rightline=false,leftline=false]
\begin{restatable}[Bias bounds from margin bounds]{thm}{gold}
For every unit-vector $\w \in \mathbb S_{m-1}$, we have
\begin{eqnarray}
\label{eq:interlace}
\frac{\gammaopt-\gamma(\w)}{R} \le \|\w-\wopt\| \le 2\sqrt{\frac{\gammaopt- \gamma(\w)}{\gammaopt}},
\end{eqnarray}
\label{thm:main}
where $R := \max_{i \in [n]}\|\x_i\|$.
\end{restatable}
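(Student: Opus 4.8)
The plan is to establish the two inequalities separately. The left one is the soft ``$\gamma$ is Lipschitz'' bound already flagged in the introduction; the right one is where the geometry of the sphere genuinely enters.

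\emph{Lower bound.} First I would note that $\gamma$ is $R$-Lipschitz on all of $\mathbb{R}^m$, being the pointwise minimum of the affine maps $\w\mapsto y_i\x_i^\top\w$, each with Lipschitz constant $\|\x_i\|\le R$. Since $\gamma(\wopt)=\gammaopt$ by definition of $\wopt$, this gives at once $\gammaopt-\gamma(\w)=\gamma(\wopt)-\gamma(\w)\le R\,\|\w-\wopt\|$, which rearranges to the left inequality in \eqref{eq:interlace}. (One could instead feed Theorem~\ref{thm:gold} into Proposition~\ref{prop:nonlinearhoffman}, but that is heavier machinery than is needed here.)

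\emph{Upper bound.} Here I would exploit two structural features of $\gamma$ beyond Lipschitzness: it is positively homogeneous of degree one, and it is superadditive, $\gamma(u+v)\ge\gamma(u)+\gamma(v)$ — both immediate from $y_i\x_i^\top(u+v)=y_i\x_i^\top u+y_i\x_i^\top v$ and the definition of $\gamma$ as a minimum. Positive homogeneity together with the definition of $\gammaopt$ gives $\gamma(u)\le\gammaopt\|u\|$ for all $u\in\mathbb{R}^m$. Applying both facts with $u=\w$, $v=\wopt$ and using $\gamma(\wopt)=\gammaopt$, I obtain
\[
\gamma(\w)+\gammaopt \;\le\; \gamma(\w+\wopt) \;\le\; \gammaopt\,\|\w+\wopt\|.
\]
If $\gamma(\w)\le 0$ the claimed upper bound is trivial, since then $(\gammaopt-\gamma(\w))/\gammaopt\ge 1$, so $2\sqrt{(\gammaopt-\gamma(\w))/\gammaopt}\ge 2\ge\|\w-\wopt\|$, the last step because $\w,\wopt$ are unit vectors; so assume $\gamma(\w)>0$, so that the left-hand side above is positive and we may square. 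Using the identity $\|\w+\wopt\|^2=4-\|\w-\wopt\|^2$, valid for unit vectors, squaring yields
\[
\gammaopt^2\,\|\w-\wopt\|^2 \;\le\; 4\gammaopt^2-(\gamma(\w)+\gammaopt)^2 \;=\; (\gammaopt-\gamma(\w))\,(3\gammaopt+\gamma(\w)) \;\le\; 4\gammaopt\,(\gammaopt-\gamma(\w)),
\]
where the last step uses $\gamma(\w)\le\gammaopt$. Dividing by $\gammaopt^2$ and taking square roots gives the right inequality in \eqref{eq:interlace}.

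\emph{On the main difficulty.} There is no deep obstacle once one hits on the right auxiliary direction, namely the \emph{unnormalized} sum $\w+\wopt$ (equivalently, the midpoint of $\w$ and $\wopt$); the rest is bookkeeping with the spherical identity $\|\w+\wopt\|^2+\|\w-\wopt\|^2=4$. The only subtlety is the sign of $\gamma(\w)+\gammaopt$ before squaring, which is why the case $\gamma(\w)\le 0$ is peeled off first — harmlessly, since $\|\w-\wopt\|\le\|\w\|+\|\wopt\|=2$ always. It is worth remarking that this argument is self-contained and does not use Theorem~\ref{thm:gold}; conversely, the upper bound just obtained is exactly the quadratic-growth estimate $\gammaopt-\gamma(\w)\ge\tfrac{\gammaopt}{4}\|\w-\wopt\|^2$, which, paired with $R$-Lipschitzness of $\gamma$, is morally the content of the KL inequality with exponent $1/2$ of Theorem~\ref{thm:gold}.
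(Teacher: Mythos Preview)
Your proof is correct, and for the upper bound it takes a genuinely different route from the paper. The paper proceeds indirectly through the KL machinery: it computes the Fr\'echet subdifferential of the extended function $f$ (Theorem~\ref{thm:frechet}), lower-bounds the minimal subgradient norm via Lemma~\ref{lm:smalllemma} to obtain the KL inequality of Theorem~\ref{thm:gold}, and then invokes the abstract error-bound Proposition~\ref{prop:nonlinearhoffman} to convert KL into the distance estimate. Your argument bypasses all of this: the superadditivity/homogeneity observation $\gamma(\w)+\gammaopt\le\gamma(\w+\wopt)\le\gammaopt\|\w+\wopt\|$, combined with the parallelogram identity $\|\w+\wopt\|^2=4-\|\w-\wopt\|^2$ on the sphere, yields the bound directly in a few lines. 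What the paper's route buys is Theorem~\ref{thm:gold} as an independent structural statement (and a template that may generalize to other settings where the variational-analytic toolkit applies); what your route buys is a self-contained elementary proof that also handles $\gamma(\w)\le 0$ cleanly, whereas the paper's KL computation tacitly uses $\gamma(\w)\ge 0$ at the last step. For the lower bound, both arguments are the same Lipschitz observation.
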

\end{mdframed}
\begin{restatable}{rmk}{}
The above theorem, illustrated in Figure \ref{fig:figure}, can be used to convert rates of convergence of function values $\gamma(\w(t)) \to \gammaopt$
produced by any algorithm (e.g gradient descent), to rates of convergence of iterates, i.e $\|\w(t)-\wopt\| \to 0$.
\end{restatable}

Note that factor $2$ in the RHS of \eqref{eq:interlace} is tight. Indeed consider the classification problem with $n=1$ (just one sample point!), $m=2$, $\x_1=(1,0)$ and $y_1=1$. The margin of any $\w \in \mathbb S_{1}$ is $\gamma(\w) = w_1$ which is maximized when $\w=(1,0)$. Thus, $\gammaopt = 1$ and $\wopt = (1,0)$. On the other hand, taking $\w = (0,1)$, we get $\|\w-\wopt\|=2$ and $1-\gamma(\w)/\gammaopt = 1$ since $\gamma(\w) = 0$.




\section{Proof of main results}
\label{sec:proofs}
In this section, we will prove our main results, namely Theorem \ref{thm:main} and \ref{thm:gold}. Before that, we need some auxiliary results which might be of independent interest themselves.

{\bfseries Notations.} Let $\Delta_{n-1} := \{(q_1,\ldots,q_n) \in \mathbb R^n \mid {\scriptstyle\sum}_{i=1}^n q_i = 1, \min_{i \in [n]} q_i \ge 0\}$ be the unit $(n-1)$-dimensional probability simplex. Given a subset $I \subseteq [n]$ of indices, let $\Delta_{n-1}(I) := \{\q \in \Delta_{n-1} \mid {\scriptstyle\sum}_{i \in I}q_i = 1\}$ be the face of $\Delta_{n-1}$ generated by vertices in $I$. The indicator function $i_A$ of a nonempty subset of $\mathbb R^m$ is the function $i_A:\mathbb R^m \to \mathbb R \cup \{+\infty\}$ defined by $i_A(\w) = 0$ if $\w \in A$, and $i_\sphere = +\infty$ else.

\begin{restatable}[Fr\'echet subdifferential of negative margin function]{thm}{}
The extended-value function $f$ in Theorem \ref{thm:gold} has Fréchet subdifferential which satisfies the following inclusion
$$
\partial f(\w) \subseteq \{b\w-{\scriptstyle\sum}_{i=1}^nq_iy_i\x_i \mid \q=(q_1,\ldots,q_n) \in \Delta_{n-1}(I(\w)),\; b \in \mathbb R\}\;\forall \w \in \sphere,
$$
where $I(\w) := \{i \in [n] \mid y_i\x_i^\top\w = \gamma(\w)\}$ is the set of indices of "support vectors" for $\w$.
\label{thm:frechet}
\end{restatable}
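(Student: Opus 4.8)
The plan is to write $f = g + i_{\sphere}$, where $g(\w) := -\gamma(\w) = \max_{i\in[n]}(-y_i\x_i^\top\w)$ is a finite-valued convex piecewise-linear function on all of $\mathbb R^m$, and $i_{\sphere}$ is the indicator of the unit sphere. The first step is to compute $\partial g$ in the classical convex sense: since $g$ is a max of the affine functions $\w\mapsto -y_i\x_i^\top\w$, standard convex subdifferential calculus gives $\partial g(\w) = \mathrm{conv}\{-y_i\x_i \mid i \in I(\w)\} = \{-\sum_{i=1}^n q_i y_i\x_i \mid \q\in\Delta_{n-1}(I(\w))\}$, because exactly the indices $i\in I(\w)$ achieve the maximum. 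The second step is to identify $\partial i_{\sphere}(\w)$: as recalled in the excerpt, $\partial i_{\sphere}(\w) = N^{\text{Fr\'echet}}_{\sphere}(\w)$, and since the sphere is a smooth submanifold, its Fréchet normal cone at $\w$ is the line $\mathbb R\w = \{b\w \mid b\in\mathbb R\}$. (One verifies this directly from \eqref{eq:fncone}: for $\w'\in\sphere$ near $\w$, $\langle b\w, \w'-\w\rangle = b\langle \w,\w'-\w\rangle = -\tfrac b2\|\w'-\w\|^2$, so the $\limsup$ is $0$; conversely any $\w^\star$ with nonzero tangential component violates the inequality.)

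The third step is the sum rule. Here I only need an \emph{inclusion}, $\partial f(\w) = \partial(g+i_{\sphere})(\w) \subseteq \partial g(\w) + \partial i_{\sphere}(\w)$, which is the easy direction and holds for Fréchet subdifferentials without any qualification condition: if $\w^\star\in\partial f(\w)$, I would like to split it, but more robustly, one uses the fuzzy/approximate sum rule — or, even more simply, argues directly that $\partial(g+i_S)(\w)\subseteq \partial g(\w)+\partial i_S(\w)$ whenever $g$ is convex (hence its Fréchet and convex subdifferentials coincide and lower-approximations add cleanly). Combining the three steps yields $\partial f(\w) \subseteq \{-\sum_i q_i y_i \x_i + b\w \mid \q\in\Delta_{n-1}(I(\w)),\, b\in\mathbb R\}$, which is exactly the claimed inclusion.

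The main obstacle is making the sum rule rigorous for the Fréchet subdifferential without over-claiming: the general Fréchet sum rule requires care, and equality can fail, but the paper only asserts "$\subseteq$". The cleanest route is to exploit convexity of $g$: for $\w^\star \in \partial f(\w)$, the defining $\liminf$ inequality \eqref{eq:frechetsubdiff} for $f=g+i_{\sphere}$, combined with the global convex underestimator $g(\w')\ge g(\w)+\langle s,\w'-\w\rangle$ available for \emph{each} $s\in\partial g(\w)$, lets one show $\w^\star - s \in \partial i_{\sphere}(\w) = \mathbb R\w$ for a suitable choice of $s$ — the point being that one subtracts off a single well-chosen convex subgradient and is left with a purely normal vector. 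I would spell this out: pick the $s\in\partial g(\w)$ minimizing $\|\w^\star - s\|$ restricted to the tangent space, show the residual has no tangential component, and conclude. A secondary (routine) check is that $I(\w)\neq\emptyset$ for every $\w\in\sphere$ so that $\Delta_{n-1}(I(\w))$ is nonempty, which is immediate since the minimum defining $\gamma(\w)$ is attained over the finite set $[n]$.
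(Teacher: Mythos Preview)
Your plan matches the paper's proof essentially line for line: write $f=g+i_{\sphere}$, compute $\partial g(\w)=\{-\sum_i q_iy_i\x_i:\q\in\Delta_{n-1}(I(\w))\}$ via the max-rule for convex functions, identify $\partial i_{\sphere}(\w)=N^{\text{Fr\'echet}}_{\sphere}(\w)=\mathbb R\w$, and combine via the sum rule. One correction worth flagging: the inclusion $\partial(g+i_{\sphere})\subseteq\partial g+\partial i_{\sphere}$ is \emph{not} the ``easy direction'' for Fr\'echet subdifferentials (the trivially true direction is the reverse, $\partial g+\partial h\subseteq\partial(g+h)$), and it does require a qualification --- here supplied by $g$ being finite convex and locally Lipschitz, which is precisely what your fallback direct argument exploits; the paper simply invokes ``the sum-rule for Fr\'echet subdifferentials'' at this step without further comment.
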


\begin{proof}[Proof of Theorem \ref{thm:frechet}]
Let $\A$ be the $n \times m$ matrix with $i$th row $\ma_i:=-y_i\x_i$, and observe that we can decompose $f=g + i_{\mathbb S_{m-1}}$, where $g:\mathbb R^m \to \mathbb R$ is defined by $g(\w):=\max_{i \in [n]}g_i(\w)$, with $g_i(\w) := \ma_i^\top \w$. By the sum-rule for Fr\'echet subdifferentials, we have $\partial f(\w) \subseteq \partial g(\w) + \partial i_\sphere(\w)$ for all $\w \in \sphere$.
Also, by a well-known result for the subdifferential of the pointwise maximum of convex functions (e.g see ~\cite[Corollary 3.6]{VanNgai2002}), one has
\begin{eqnarray*}
\begin{split}
\partial g(\w) &= \conv\left(\cup_{i \in I(\w)}\partial g_i(\w)\right) = \{{\scriptstyle\sum}_{i=1}^n q_i\w_i^\star \mid \q \in \Delta_{n-1}(I(\w)),\;\w_i^\star \in \partial g_i(\w)\;\forall i \in I(\w)\}\\
&= \{-{\scriptstyle\sum}_{i=1}^nq_iy_i\x_i \mid \q \in \Delta_{n-1}(I(\w))\} = \{\A^\top\q \mid \q \in \Delta_{n-1}(I(\w))\}.
\end{split}
\end{eqnarray*}

On the other hand, by \eqref{eq:fncone} and Example 2.6 of ~\cite{bauchkefrechetcone}, it holds $\w \in \sphere$ for all $\w \in \sphere$ that $\partial i_\sphere(\w) = \mathbb R\w := \{b\w \mid b \in \mathbb R\}$, the $1$-dimensional subspace of $\mathbb R^m$ spanned by $\w$. Putting things together then gives the result.
\end{proof}

For the proof of Theorem \ref{thm:main}, we will also need the following elementary result (also see ~\cite{telgarsky2019})
\begin{restatable}{lm}{smalllemma}
For every $\q=(q_1,\ldots,q_n) \in \Delta_{n-1}$, it holds that
$\gammaopt \le \|\sum_{i=1}^nq_i y_i\x_i\| \le 1$.
\label{lm:smalllemma}
\end{restatable}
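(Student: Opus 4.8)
\textbf{Proof proposal for Lemma \ref{lm:smalllemma}.}
The plan is to treat the two inequalities separately, each by a one-line estimate. Write $\v := \sum_{i=1}^n q_i y_i \x_i$ for a fixed $\q \in \Delta_{n-1}$.

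For the upper bound $\|\v\| \le 1$, I would simply invoke the triangle inequality together with convexity of the coefficients: since $\q \in \Delta_{n-1}$ has nonnegative entries summing to $1$, we have $\|\v\| \le \sum_{i=1}^n q_i \|y_i\x_i\| = \sum_{i=1}^n q_i \|\x_i\| \le \sum_{i=1}^n q_i = 1$, using the standing normalization $\|\x_i\| \le 1$ from the introduction.

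For the lower bound $\|\v\| \ge \gammaopt$, the idea is weak duality: test $\v$ against the max-margin direction $\wopt \in \sphere$. By definition of $\gammaopt$ and $\wopt$, every sample satisfies $y_i\x_i^\top\wopt \ge \gammaopt$, hence $\v^\top\wopt = \sum_{i=1}^n q_i\, y_i\x_i^\top\wopt \ge \gammaopt \sum_{i=1}^n q_i = \gammaopt$. Cauchy--Schwarz and $\|\wopt\| = 1$ then give $\|\v\| = \|\v\|\,\|\wopt\| \ge \v^\top\wopt \ge \gammaopt$, which is the claim.

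There is no real obstacle here; the result is elementary and the only thing to be careful about is to state clearly that the lower bound rests on the existence of at least one unit vector (namely $\wopt$) achieving margin $\gammaopt$, so that the Cauchy--Schwarz step can be applied. I would present both bounds in a single short displayed chain of inequalities.
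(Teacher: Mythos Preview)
Your proof is correct and essentially identical to the paper's: both test the vector $\v$ against $\wopt$ for the lower bound (the paper phrases this via the dual formulation of the norm, you via Cauchy--Schwarz, which is the same thing), and both use the convex combination of $\|\x_i\|\le 1$ for the upper bound (the paper writes it as swapping $\sup$ and expectation). No gaps.
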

We are now ready to proof Theorem \ref{thm:main}.
\begin{proof}[Proof of Theorem \ref{thm:main}]
Let $f:\mathbb R^m \to \mathbb R\cup\{+\infty\}$ be the negative margin function appearing in the theorem. Thanks to Theorem \ref{thm:frechet}, we know that $\partial f(\w) \subseteq \{\A^\top\q + b\w \mid \q \in \Delta_{n-1}(I(\w)),\;b\in\mathbb R\}$ for all $\w \in \sphere$. Thus, we may lower-bound the minimum norm of Fr\'echet subgradients of $g$ at any point $\w\in\sphere$ as follows
\begin{eqnarray*}
\begin{split}
\|\partial f(\w)\|^2 &= \inf_{\w^\star \in \partial f(\w)}\|\w^\star\|^2 \ge \inf_{\q \in \Delta_{n-1}(I(\w)),\;b \in \mathbb R}\|\A^\top\q + b\w\|^2,\text{ by Theorem \ref{thm:frechet}}\\
&= \inf_{\q \in \Delta_{n-1}(I(\w))}\|\proj_{\w^\perp}(\A^\top \q)\|^2,\text{ distance between line and origin}\\
&=\inf_{\q \in \Delta_{n-1}(I(\w))}\|\A^\top\q-(\w^\top\A^\top\q)\w\|^2,\text{ orthogonal projection formula}\\
&=\inf_{\q \in \Delta_{n-1}(I(\w))}\|\A^\top\q\|^2 - (\w^\top\A^\top\q)^2,\text{ basic linear algebra}\\
&=\inf_{\q \in \Delta_{n-1}(I(\w))}\|\A^\top\q\|^2 - \gamma(\w)^2,\text{ because }\ma_i^\top\w=-y_i\x_i^\top\w = -\gamma(\w)\;\forall i \in I(\w)\\
&\ge \inf_{\q \in \Delta_{n-1}}\|\A^\top\q\|^2-\gamma(\w)^2,\text{ since }\Delta_{n-1}(I(\w)) \subseteq \Delta_{n-1}\\
&= \gammaopt^2-\gamma(\w)^2,\text{ by Lemma \ref{lm:smalllemma}}\\
&= (\gamma(\w) + \gammaopt)\cdot(-\gamma(\w)+\gammaopt)\\
&=(\gamma(\w) + \gammaopt)\cdot(f(\w)-\min f),\text{ by definition of }f\\
&\ge \gammaopt\cdot(f(\w)-\min f),\text{ since }\gamma(\w) \ge 0\text{ by assumption}.
\end{split}
\end{eqnarray*}
Combining the above inequality with the LHS of \eqref{eq:sslopebounds} then  gives
$$
\sslope f(\w) \ge \liminf_{(\w',f(\w')) \to (\w,f(\w))}\|\partial f(\w')\| \ge \gammaopt^{1/2}\cdot (f(\w)-\min f)^{1/2}.
$$
Thus, the negative margin function $g$ satisfies a KL-inequality around the max-margin model $\wopt$, with exponent $\theta=1/2$ and modulus $\alpha=2/\sqrt{\gammaopt}$ as claimed.
\end{proof}

\begin{proof}[Proof of Theorem \ref{thm:main}]
The LHS of the inequality is trivial since the margin function $\gamma$ is $R$-Lipschitz on $\mathbb S_{m-1}$.
For the RHS, note from Theorem \ref{thm:gold} that the negative margin function $g$ (defined in Theorem \ref{thm:main}) satisfies a KL-inequality around the point $\w_0 = \wopt$, with exponent $\theta=1/2$ and modulus $\alpha=2/\sqrt{\gammaopt}$. The result then follows as upon invoking Proposition \ref{prop:nonlinearhoffman}.
\end{proof}

\section{Concluding remarks}
We have established a Kurdyka-\L{}ojasiewicz inequality with exponent $1/2$ for the margin function in linearly separable classification problems. This result gives hopes for the existence of fast (perhaps quasi-linear) optimization schemes for such problems, a quest which will be pursued in future work. Also, we have employed our result to establish a generic inequality linking the convergence rates of the bias and of margin. This immediately allows for the transfer of convergence rates for the margin, to convergence rates for the bias, irrespective of the algorithms / constructs (gradient-flow, gradient-descent, what stepsize, etc.) used to establish the former.

\paragraph{Acknowledgement.} The author is thankful to Ziwei Ji and Matus Telgarsky for reporting an error in one of the theorems in an earlier version of this preprint. Also, thanks to Eugene Ndiaye for proof-reading the manuscript.

\bibliographystyle{apalike}
\bibliography{literature}

\appendix
\section{Omitted technical proofs}


\smalllemma*
\begin{proof}
For any $\q=(q_1,\ldots,q_n) \in \Delta_{n-1}$, one computes
\begin{eqnarray*}
\begin{split}
\|\sum_{=1}^, q_iy_i\x_i\| &= \sup_{\bu \in \mathbb B_M} \langle \sum_{i=1}^nq_iy_i\x_i,\bu\rangle = \sup_{\bu \in \mathbb B_M} \mathbb E_{i \sim \q}[y_i\langle \x_i,\bu\rangle]\\
&\ge \mathbb E_{i \sim \q}[y_i\langle \x_i,\w^{\text{opt}}_n\rangle],\text{ by taking any }\bu = \w_n^{\text{opt}} \in \opt_n\\
& \ge \gammaopt,\text{ by definition of }\gammaopt.
\end{split}
\end{eqnarray*}
This proves the lower-bound.
On the other hand, using the fact that $$
\sup_{\bu \in \mathbb B_M} \mathbb E_{i \sim \q}[y_i\langle \x_i, \bu\rangle] \le \mathbb E_{i \sim \q} [\sup_{\bu \in \mathbb B_M} y\langle \x_i, \bu\rangle] = \mathbb E_{i \sim \q}[\|\x\|] \le 1,
$$
since $\|\x_i\| \le 1$ for all $i \in [n]$ by hypothesis. This proves the upper-bound.
\end{proof}

\end{document}